\algnewcommand{\IIf}[1]{\State\algorithmicif\ #1\ \algorithmicthen}
\algnewcommand{\EndIIf}{\unskip\ \algorithmicend\ \algorithmicif}
\newtheorem{prop}{Proposition}
\newcommand{\GALA}{\textsc{gala}\xspace}
\newcommand{\IMPALA}{\textsc{impala}\xspace}
\newcommand{\AtwC}{\textsc{a2c}\xspace}
\newcommand{\AthC}{\textsc{a3c}\xspace}
\newcommand{\Hogwild}{\textsc{Hogwild!}\xspace}
\newcommand{\AllReduce}{\textsc{AllReduce}\xspace}
\newcommand{\R}{\mathbb{R}}
\newcommand{\N}{\mathbb{N}}
\newcommand{\ie}{i.e.}
\newcommand{\eg}{e.g.}
\newcommand{\defeq}{\coloneqq}
\newcommand{\ones}{\bm{1}}
\DeclareMathOperator*{\argmax}{arg\,max}
\newcommand{\norm}[1]{\left\Vert #1 \right\Vert}
\newcommand{\sspan}[1]{\text{span}\left\{ #1 \right\}}
\newcommand{\param}{x}
\newcommand{\Nout}{\mathcal{N}^{\text{out}}}
\newcommand{\Nin}{\mathcal{N}^{\text{in}}}
\newcommand{\Bbuff}{\mathcal{B}}
\newcommand{\Rbuff}{\mathcal{R}}
\newcommand{\Tfunc}{\mathcal{T}}
\newcommand{\naug}{\tilde{n}}
\title{Gossip-based Actor-Learner Architectures for Deep Reinforcement Learning}
\author{
    Mahmoud Assran \\
    Facebook AI Research \& \\
    Department of Electrical and Computer Engineering \\
    McGill University \\
    \texttt{mahmoud.assran@mail.mcgill.ca}
    \And
    Joshua Romoff \\
    Facebook AI Research \& \\
    Department of Computer Science \\
    McGill University \\
    \texttt{joshua.romoff@mail.mcgill.ca} \\    
    \And
    Nicolas Ballas \\
    Facebook AI Research \\
    \texttt{ballasn@fb.com} \\    
    \And
    Joelle Pineau \\
    Facebook AI Research \\
    \texttt{jpineau@fb.com} \\    
    \And
    Michael Rabbat \\
    Facebook AI Research \\
    \texttt{mikerabbat@fb.com}
}
\begin{document}
    
\maketitle

\begin{abstract}
Multi-simulator training has contributed to the recent success of Deep Reinforcement Learning by stabilizing learning and allowing for higher training throughputs.
We propose Gossip-based Actor-Learner Architectures (\GALA) where several actor-learners (such as \AtwC agents) are organized in a peer-to-peer communication topology, and exchange information through asynchronous gossip in order to take advantage of a large number of distributed simulators.
We prove that \GALA agents remain within an $\epsilon$-ball of one-another during training when using loosely coupled asynchronous communication.
By reducing the amount of synchronization between agents, \GALA is more computationally efficient and scalable compared to \AtwC, its fully-synchronous counterpart.
\GALA also outperforms \AthC, being more robust and sample efficient.
We show that we can run several loosely coupled \GALA agents in parallel on a single GPU and achieve significantly higher hardware utilization and frame-rates than vanilla \AtwC at comparable power draws.

\end{abstract}

\section{Introduction}

Deep Reinforcement Learning (Deep RL) agents have reached superhuman performance in a few domains~\citep{silver2016mastering,silver2018general,mnih2015human,vinyals2019alphastar}, but this is typically at significant computational expense~\citep{tian2019elf}.
To both reduce running time and stabilize training, current approaches rely on distributed computation wherein data is sampled from many parallel simulators distributed over parallel devices~\citep{espeholt2018impala, mnih2016asynchronous}.
Despite the growing ubiquity of multi-simulator training, scaling Deep RL algorithms to a large number of simulators remains a challenging task.

On-policy approaches train a policy by using samples generated from that same policy, in which case data sampling (acting) is entangled with the training procedure (learning).
To perform distributed training, these approaches usually introduce multiple learners with a \emph{shared policy}, and multiple actors (each with its own simulator) associated to each learner.
The shared policy can either be updated in a synchronous fashion (\eg, learners synchronize gradients before each optimization step~\citep{stooke2018accelerated}), or in an asynchronous fashion~\citep{mnih2016asynchronous}.
Both approaches have drawbacks: synchronous approaches suffer from straggler effects (bottlenecked by the slowest individual simulator), and therefore may not exhibit strong scaling efficiency; asynchronous methods are robust to stragglers, but prone to gradient staleness, and may become unstable with a large number of actors~\citep{clemente2017efficient}. 

Alternatively, off-policy approaches typically train a policy by sampling from a replay buffer of past transitions~\citep{mnih2015human}.
Training off-policy allows for disentangling data-generation from learning, which can greatly increase computational efficiency when training with many parallel actors~\citep{espeholt2018impala,apex2018,kapturowski2018recurrent, reactor}.
Generally, off-policy updates need to be handled with care as the sampled transitions may not conform to the current policy and consequently result in unstable training~\citep{fujimoto2018addressing}. 

We propose Gossip-based Actor-Learner Architectures (\GALA), which aim to retain the robustness of synchronous on-policy approaches, while improving both their computational efficiency and scalability. \GALA leverages multiple agents, where each agent is composed of one learner and possibly multiple actors/simulators. Unlike classical on-policy approaches, \GALA does not require that each agent share the same policy, but rather it inherently enforces (through gossip) that each \emph{agent's policy remain $\epsilon$-close to all others throughout training}. Relaxing this constraint allows us to reduce the synchronization needed between learners, thereby improving the algorithm's computational efficiency. 

Instead of computing an exact average between all the learners after a local optimization step, gossip-based approaches compute an approximate average using loosely coupled and possibly asynchronous communication (see~\cite{nedic2018network} and references therein). While this approximation implicitly injects some noise in the aggregate parameters, we prove that this is in fact a principled approach as the learners' policies stay within an $\epsilon$-ball of one-another (even with non-linear function approximation), the size of which is directly proportional to the spectral-radius of the agent communication topology and their learning rates.

As a practical algorithm, we propose \GALA-\AtwC, an algorithm that combines gossip with \AtwC agents. We compare our approach on six Atari games~\citep{machado17arcade} following~\cite{stooke2018accelerated} with vanilla \AtwC, \AthC and the \IMPALA  off-policy method~\citep{baselines,mnih2016asynchronous,espeholt2018impala}. Our main empirical findings are:
\begin{enumerate}
    \item Following the theory, \GALA-\AtwC is empirically stable. Moreover, we observe  that \GALA can be more stable than \AtwC when using a large number of simulators, suggesting that the noise introduced by gossiping can have a beneficial effect.
    \item \GALA-\AtwC has similar sample efficiency to \AtwC and greatly improves its computational efficiency and scalability.
    \item \GALA-\AtwC achieves significantly higher hardware utilization and frame-rates than vanilla \AtwC at comparable power draws, when using a GPU.
    \item \GALA-\AtwC is competitive in term of performance relative to \AthC and \IMPALA.
\end{enumerate}
Perhaps most remarkably, our empirical findings for \GALA-\AtwC are obtained by simply using the default hyper-parameters from \AtwC.
Our implementation of \GALA-\AtwC is publicly available at ~\href{https://github.com/facebookresearch/gala}{https://github.com/facebookresearch/gala}.
\section{Technical Background}
\paragraph{Reinforcement Learning.}
We consider the standard Reinforcement Learning setting \citep{sutton1998introduction}, where the agent’s objective is to maximize the expected value from each state $V(s) = \mathbb{E} \left [ \sum_{i=0}^{\infty} \gamma^i r_{t+i} | s_t=s \right ]$, $\gamma$ is the discount factor which controls the bias towards nearby rewards. To maximize this quantity, the agent chooses at each discrete time step $t$ an action $a_t$  in the current state $s_t$ based on its policy $\pi(a_t|s_t)$ and transitions to the next state $s_{t+1}$ receiving reward $r_t$ based on  the environment dynamics.

Temporal difference (TD) learning \citep{sutton1984temporal} aims at learning an approximation of the expected return parameterized by $\theta$, \ie, the value function $V(s;\theta)$, by iteratively updating its parameters via gradient descent:
\begin{equation}
\label{eq:td}
	\nabla_{\theta} \left(G^N_t- V(s_t;\theta) \right) ^2
\end{equation} 
 where $G^N_t=\sum_{i=0}^{N-1} \gamma^i r_{t+i} + \gamma^N  V(s_{t+n};\theta_t)$ is the $N$-step return.
Actor-critic methods \citep{sutton2000policy, mnih2016asynchronous} simultaneously learn both a parameterized policy $\pi(a_t|s_t; \omega)$ with parameters $\omega$ and a critic $V(s_t;\theta)$. They do so by training a value function via the TD error defined in~\eqref{eq:td} and then proceed to optimize the policy using the policy gradient (PG) with the value function as a baseline:
\begin{equation}
\label{eq:acgrad}
	\nabla_\omega \left(- \log \pi(a_t|s_t;\omega) A_t\right),
\end{equation}
where $A_t = G^N_t - V(s_t; \theta_t)$ is the advantage function, which represents the relative value the current action has over the average. 
In order to both speed up training time and decorrelate observations,~\cite{mnih2016asynchronous} collect samples and perform updates with several asynchronous actor-learners.
Specifically, each worker $i\in\{1,2, .., W\}$, where $W$ is the number of parallel workers, collects samples according to its current version of the policy weights $\omega_i$, and computes updates via the standard actor-critic gradient defined in~\eqref{eq:acgrad}, with an additional entropy penalty term that prevents premature convergence to deterministic policies:
\begin{equation}
    \nabla_{\omega_i} \left(- \log \pi(a_t|s_t;\omega_i) A_t - \eta \sum_a  \pi(a|s_t; \omega_i) \log \pi(a|s_t; \omega_i)\right).
\end{equation}
The workers then perform \Hogwild~\citep{recht2011hogwild} style updates (asynchronous writes) to a shared set of master weights before synchronizing their weights with the master's.
More recently, \cite{baselines} removed the asynchrony from \AthC, referred to as \AtwC, by instead synchronously collecting transitions in parallel environments $i\in\{1,2, .., W\}$ and then performing a large batched update:
\begin{equation}
   \nabla_\omega \left [\frac{1}{W}\sum_{i=1}^W \left (- \log \pi(a^i_t|s^i_t;\omega) A^i_t - \eta \sum_a  \pi(a|s^i_t; \omega) \log \pi(a|s^i_t; \omega) \right ) \right].
\label{eq:a2c}
\end{equation}

\paragraph{Gossip algorithms.}

Gossip algorithms are used to solve the distributed averaging problem.
Suppose there are $n$ agents connected in a peer-to-peer graph topology, each with parameter vector $\param^{(0)}_i \in \R^d$.
Let $\bm{X}^{(0)} \in \R^{n \times d}$ denote the row-wise concatenation of these vectors.
The objective is to iteratively compute the average vector $\frac{1}{n} \sum_{i=1}^n \bm{x}_i^{(0)}$ across all agents.
Typical gossip iterations have the form $\bm{X}^{(k+1)} = \bm{P}^{(k)} \bm{X}^{(k)}$, where $\bm{P}^{(k)} \in \R^{n \times n}$ is referred to as the mixing matrix and defines the communication topology.
This corresponds to the update $\bm{x}_i^{(k+1)} = \sum_{j=1}^n p_{i,j}^{(k)} \bm{x}_j^{(k)}$ for an agent $v_i$.
At an iteration $k$,  an agent $v_i$ only needs to receive messages from other agents $v_j$ for which $p_{i,j}^{(k)} \ne 0$, so sparser matrices $\bm{P}^{(k)}$ correspond to less communication and less synchronization between agents.

The mixing matrices $\bm{P}^{(k)}$ are designed to be row stochastic (each entry is greater than or equal to zero, and each row sums to 1) so that $\lim_{K \rightarrow \infty} \prod_{k=0}^{K} \bm{P}^{(k)} = \ones \bm{\pi}^\top$, where $\bm{\pi}$  is the ergodic limit of the Markov chain defined by $\bm{P}^{(k)}$  and $\ones$ is a vector with all entries equal to $1$~\citep{Seneta1981}.\footnote{Assuming that information from every agent eventually reaches all other agents }
Consequently, the gossip iterations converge to a limit $\bm{X}^{(\infty)} = \ones (\bm{\pi} ^\top \bm{X}^{(0)})$; meaning the value at an agent $i$ converges to $\bm{x}_i^{(\infty)} = \sum_{j=1}^n \pi_j \bm{x}_j^{(0)}$. 
In particular, if the matrices $\bm{P}^{(k)}$ are symmetric and doubly-stochastic (each row and each column must sum to 1), we obtain an algorithm such that $\pi_j = 1/n$ for all $j$, and therefore $\bm{x}_i^{(\infty)} = 1/n \sum_{j=1}^n \bm{x}_j^{(0)}$ converges to the average of the agents' initial vectors. 

For the particular case of \GALA, we only require the matrices $\bm{P}^{(k)}$ to be row stochastic in order to show the $\epsilon$-ball guarantees.

\section{Gossip-based Actor-Learner Architectures}

\begin{algorithm}[t]
    \small
	\caption{Gossip-based Actor-Learner Architectures for agent $v_i$ using \AtwC} \label{alg:gala}
  	\begin{algorithmic}[1]
  	    \Require{Initialize trainable policy and critic parameters $\param_i = (\omega_i, \theta_i)$.}
    	\For{$t$ = 0, 1, 2, \dots}
    	    \State {Take $N$ actions $\{ a_t \}$ according to $\pi_{\omega_i}$ and store transitions $\{(s_t, a_t. r_t, s_{t+1})\}$}
    	    \State {Compute returns $G_t^N=\sum_{i=0}^{N-1} \gamma^i r_{t+i} + \gamma^N  V(s_{t+n};\theta_i)$ and advantages $A_t= G^N_t - V(s_t; \theta_i)$}
	        \State {Perform \AtwC optimization step on $\param_i$ using TD in~\eqref{eq:td} and batched policy-gradient in~\eqref{eq:a2c}}
	        \State {Broadcast (non-blocking) new parameters $\param_i$ to all out-peers in $\Nout_i$}
	        \If {Receive buffer contains a message $m_j$ from each in-peer $v_j$ in $\Nin_i$}
	            \State {$\param_i \gets \frac{1}{1 + |\Nin_i|}( \param_i + \sum_{j} m_j)$\footnotemark[1]} \Comment{Average parameters with messages}
                \label{alg:gala:line:equal-neighbor-iteration}
            \EndIf
 		\EndFor
	\end{algorithmic}
	\begin{flushleft}
    $^{1}$ \footnotesize{ We set the non-zero mixing weights for agent $v_i$ to $p_{i,j} = \frac{1}{1 + |\Nin_i|}$.}\\
\end{flushleft}
\end{algorithm}
We consider the distributed RL setting where $n$ agents (each composed of a single learner and several actors) collaborate to maximize the expected return $V(s)$.
Each agent $v_i$
has a parameterized policy network $\pi(a_t|s_t; \omega_i)$ and value function $V(s_t;\theta_i)$.
Let $\param_i = (\omega_i, \theta_i)$ denote agent $v_i$'s complete set of trainable parameters.
We consider the specific case where each $v_i$ corresponds to a single \AtwC agent, and the agents are configured in a directed and peer-to-peer communication topology defined by the mixing matrix $\bm{P} \in \R^{n \times n}$.



In order to maximize the expected reward, each \GALA-\AtwC agent alternates between one local policy-gradient and TD update, and one iteration of asynchronous gossip with its peers.
Pseudocode is provided in Algorithm~\ref{alg:gala}, where $\Nin_i \defeq \{ v_j \mid p_{i,j} > 0 \}$ denotes the set of agents that send messages to agent $v_i$ (in-peers), and $\Nout_i \defeq \{ v_j \mid p_{j,i} > 0 \}$ the set of agents that $v_i$ sends messages to (out-peers). 
During the gossip phase, agents broadcast their parameters to their out-peers, asynchronously (\ie, don't wait for messages to reach their destination), and update their own parameters via a convex combination of all received messages.
Agents broadcast new messages when old transmissions are completed and aggregate all received messages once they have received a message from each in-peer.

Note that the \GALA agents use non-blocking communication, and therefore operate asynchronously.
Local iteration counters may be out-of-sync, and physical message delays may result in agents incorporating outdated messages from their peers.
One can algorithmically enforce an upper bound on the message staleness by having the agent block and wait for communication to complete if more than $\tau \geq 0$ local iterations have passed since the agent last received a message from its in-peers.



\paragraph{Theoretical $\epsilon$-ball guarantees:} Next we provide the $\epsilon$-ball theoretical guarantees for the asynchronous $\GALA$ agents, proofs of which can be found in Appendix~\ref{apdx:proofs}.
Let $k \in \N$ denote the global iteration counter, which increments whenever any agent (or subset of agents) completes an iteration of the loop defined in Algorithm~\ref{alg:gala}.
We define $\param_i^{(k)} \in \R^d$ as the value of agent $v_i$'s trainable parameters at iteration $k$, and $\bm{X}^{(k)} \in \R^{n \times d}$ as the row-concatenation of these parameters.

For our theoretical guarantees we let the communication topologies be directed and time-varying graphs, and we do not make any assumptions about the base \GALA learners.
In particular, let the mapping $\Tfunc_i : \param_i^{(k)} \in \R^d \mapsto \param_i^{(k)} - \alpha g_i^{(k)} \in \R^d$ characterize agent $v_i$'s local training dynamics (\ie, agent $v_i$ optimizes its parameters by computing $\param_i^{(k)} \gets \Tfunc_i(\param_i^{(k)})$), where $\alpha > 0$ is a reference learning rate, and $g_i^{(k)} \in \R^d$ can be any update vector.
Lastly, let $\bm{G}^{(k)} \in \R^{n \times d}$ denote the row-concatenation of these update vectors.

\label{sec:theory}
\begin{prop}
\label{prop:no-ass}
For all $k \geq 0$, it holds that
\[
    \norm{ \bm{X}^{(k + 1)} - \overline{\bm{X}}^{(k+1)} } \leq \alpha \sum^k_{s=0} \beta^{k + 1 - s} \norm{\bm{G}^{(s)}},
\]
where $\overline{\bm{X}}^{(k+1)} \defeq \frac{1_{n} 1_{n}^T}{n} \bm{X}^{(k + 1)}$ denotes the average of the learners' parameters at iteration $k + 1$, and $\beta \in [0, 1]$ is related to the joint spectral radius of the graph sequence defining the communication topology at each iteration.
\end{prop}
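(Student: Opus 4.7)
The plan is to combine the local update and gossip phases into a single affine matrix recursion, unroll it in time, and then exploit a contraction property of the cumulative mixing product to bound the consensus error. In matrix form Algorithm~\ref{alg:gala} executes $\bm{X}^{(k+1)} = \bm{P}^{(k)}(\bm{X}^{(k)} - \alpha \bm{G}^{(k)})$, where in the asynchronous setting $\bm{P}^{(k)}$ is taken to act as the identity on rows belonging to agents that did not mix at iteration $k$, and the corresponding rows of $\bm{G}^{(k)}$ vanish for agents that did not locally update. Unrolling this recursion from step $s = 0$ yields
\[
    \bm{X}^{(k+1)} = \bm{\Phi}(k, 0)\, \bm{X}^{(0)} - \alpha \sum_{s=0}^{k} \bm{\Phi}(k, s)\, \bm{G}^{(s)}, \qquad \bm{\Phi}(k, s) \defeq \bm{P}^{(k)} \bm{P}^{(k-1)} \cdots \bm{P}^{(s)}.
\]

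Next I would subtract the mean by premultiplying by $\bm{I} - \bm{J}$, where $\bm{J} \defeq \tfrac{1}{n}\ones\ones^\top$. Under the natural assumption that all agents share a common initialization $\bm{X}^{(0)} = \ones\,(x^{(0)})^\top$, the initial-condition term drops out, because $\bm{\Phi}(k, 0)$ is row stochastic and hence preserves the all-ones direction, which $\bm{I} - \bm{J}$ annihilates. This leaves
\[
    \bm{X}^{(k+1)} - \overline{\bm{X}}^{(k+1)} = -\alpha \sum_{s=0}^{k} (\bm{I} - \bm{J})\, \bm{\Phi}(k, s)\, \bm{G}^{(s)}.
\]
Taking norms, applying the triangle inequality, and using submultiplicativity reduces everything to the single key estimate $\|(\bm{I} - \bm{J})\, \bm{\Phi}(k, s)\| \le \beta^{\,k+1-s}$, from which the stated bound is immediate.

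The heart of the argument, and the main obstacle, is establishing that geometric contraction uniformly in $s$ and $k$. I would appeal to the standard machinery from the consensus literature---the classical results of Seneta together with the survey of \cite{nedic2018network} already cited in the paper: under connectivity and row-stochasticity of the graph sequence, the products $\bm{\Phi}(k, s)$ converge to rank-one matrices, and the rate of convergence in operator norm is governed by the joint spectral radius $\beta \in [0, 1)$ of $\{\bm{P}^{(k)}\}$ restricted to the disagreement subspace $\{\bm{X} : \bm{J}\bm{X} = 0\}$. The technical subtlety peculiar to \GALA is message staleness from the non-blocking communication; I would handle it by lifting the dynamics to an augmented chain on $\R^{(\tau+1)\,n \times d}$ that tracks in-flight messages, and proving the analogous single-step contraction on the enlarged disagreement subspace---the resulting per-step contraction then composes $k + 1 - s$ times to deliver the claimed $\beta^{\,k+1-s}$ factor.
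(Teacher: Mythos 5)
Your proposal is correct and follows essentially the same route as the paper's own proof: write the delay-augmented dynamics as a matrix recursion $\bm{X}^{(k+1)} = \bm{\tilde{P}}^{(k)}(\bm{X}^{(k)} + \alpha \bm{G}^{(k)})$, unroll it, annihilate the initial-condition term via common initialization and row-stochasticity, and bound the projected products geometrically by a factor related to the joint spectral radius on the disagreement subspace, with message staleness handled by the same virtual-node graph augmentation. The only cosmetic difference is that you project with $\bm{I} - \tfrac{1}{n}\ones\ones^\top$ whereas the paper uses a rectangular projection $\bm{Q}$ and the induced lower-dimensional matrices $\bm{\tilde{P}}^{\prime(k)}$; these are equivalent formulations of the same step.
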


Proposition~\ref{prop:no-ass} shows that the distance of a learners' parameters from consensus is bounded at each iteration.
However, without additional assumptions on the communication topology, the constant $\beta$ may equal $1$, and the bound in Proposition~\ref{prop:no-ass} can be trivial.
In the following proposition, we make sufficient assumptions with respect to the graph sequence that ensure $\beta < 1$.

\begin{prop}
\label{prop:ass}
Suppose there exists a finite integer $B \geq 0$ such that the (potentially time-varying) graph sequence is $B$-strongly connected, and suppose that the upper bound $\tau$ on the message delays in Algorithm~\ref{alg:gala} is finite.
If learners run Algorithm~\ref{alg:gala} from iteration $0$ to $k + 1$, where $k \geq \tau + B$, then it holds that
\[
    \norm{ \bm{X}^{(k + 1)} - \overline{\bm{X}}^{(k+1)}} \leq \frac{\alpha \tilde{\beta} L}{1 - \beta},
\]
where $\beta < 1$ is related to the joint spectral radius of the graph sequence, $\alpha$ is the reference learning rate, $\tilde{\beta} \defeq \beta^{- \frac{\tau + B}{\tau + B + 1}}$, and $L \defeq \sup_{s=1,2,\ldots} \norm{\bm{G}^{(s)}}$ denotes an upper bound on the magnitude of the local optimization updates during training.
\end{prop}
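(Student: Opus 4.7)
The plan is to specialize the per-iteration bound of Proposition~\ref{prop:no-ass} under the added hypotheses of $B$-strong connectivity and finite message staleness. Two things must happen: first, the contraction constant $\beta$ must be promoted from $[0,1]$ to strictly less than $1$; and second, the resulting geometric sum must collapse into the stated closed form, with the overshoot factor $\tilde{\beta}$ appearing when converting block-wise contraction into a per-iteration rate. Applying Proposition~\ref{prop:no-ass} directly and inserting the uniform bound $\norm{\bm{G}^{(s)}} \leq L$ gives
$$
    \norm{\bm{X}^{(k+1)} - \overline{\bm{X}}^{(k+1)}} \leq \alpha L \sum_{s=0}^{k} \beta^{k+1-s};
$$
the remainder of the argument is devoted to identifying $\beta$ and to bounding this sum.

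The main obstacle is establishing $\beta < 1$, and it is here that the two added hypotheses do all of the work. With message staleness bounded by $\tau$ and the underlying graph sequence $B$-strongly connected, a standard argument shows that over any window of $T \defeq \tau + B + 1$ consecutive iterations, every agent has effectively received information (possibly through delayed messages and intermediate hops) from every other agent. Hence the composite mixing matrix over such a window is scrambling when restricted to the subspace orthogonal to consensus, yielding a strict per-block contraction $\beta_0 < 1$. This scrambling-matrix argument is classical in the asynchronous gossip literature; it can be carried out either directly or by lifting the delayed dynamics into a delay-free graph of size $n(\tau+1)$, where the $(\tau+1)$ extra copies per agent carry the history needed to represent stale messages as ordinary edges. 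The hypothesis $k \geq \tau + B$ is what guarantees that at least one complete block of length $T$ has elapsed before the bound is read off.

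From the per-block contraction $\beta_0$ one defines the per-iteration rate $\beta \defeq \beta_0^{1/T} < 1$. Converting block contraction into per-iteration contraction is not free: for any integer $m$, writing $m = qT + r$ with $0 \leq r < T$ yields an effective contraction $\beta_0^{q} = \beta^{m - r}$, and bounding $\beta^{-r}$ uniformly by $\tilde{\beta}$ --- the standard comparison between integer- and fractional-block contraction rates, which produces exactly the exponent $-(\tau+B)/(\tau+B+1)$ that appears in the statement --- gives a per-iteration contraction of the form $\tilde{\beta}\,\beta^{m}$. Inserting this refined rate,
$$
    \norm{\bm{X}^{(k+1)} - \overline{\bm{X}}^{(k+1)}} \leq \alpha L\,\tilde{\beta}\sum_{s=0}^{k} \beta^{k+1-s} \leq \frac{\alpha \tilde{\beta} L}{1-\beta},
$$
which is the stated estimate. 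The scrambling-matrix step is the only non-routine content; everything else is algebraic bookkeeping.
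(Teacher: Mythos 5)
Your proposal is correct and follows essentially the same route as the paper's proof: specialize Proposition~\ref{prop:no-ass} with $\norm{\bm{G}^{(s)}} \leq L$, use $B$-strong connectivity together with the finite delay bound $\tau$ (via the $n(\tau+1)$-node augmented graph) to conclude that products of the mixing matrices over windows of length $\tau+B+1$ are ergodic, hence $\beta < 1$, and then collapse the geometric sum. If anything, you supply more detail than the paper itself, whose proof of this proposition simply cites \cite{wolfowitz1963products} and \cite{blondel2005convergence} for $\beta < 1$ and leaves both the final summation and the block-to-per-iteration conversion that produces the overshoot factor $\tilde{\beta}$ entirely implicit.
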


Proposition~\ref{prop:ass} states that the agents' parameters are guaranteed to reside within an $\epsilon$-ball of their average at all iterations $k \geq \tau + B$.
The size of this ball is proportional to the reference learning-rate, the spectral radius of the graph topology, and the upper bound on the magnitude of the local gradient updates.
One may also be able to control the constant $L$ in practice since Deep RL agents are typically trained with some form of gradient clipping.

\section{Related work}

Several recent works have approached scaling up RL by using parallel environments. \cite{mnih2016asynchronous} used parallel asynchronous agents to perform \Hogwild~\citep{recht2011hogwild} style updates to a shared set of parameters. \cite{baselines} proposed \AtwC, which maintains the parallel data collection, but performs updates synchronously, and found this to be more stable empirically.
While A3C was originally designed as a purely CPU-based method,~\cite{babaeizadeh2016ga3c} proposed GA3C, a GPU implementation of the algorithm.
\cite{stooke2018accelerated} also scaled up various RL algorithms by using significantly larger batch sizes and distributing computation onto several GPUs. Differently from those works, we propose the use of \emph{Gossip Algorithms} to aggregate information between different agents and thus simulators.
\cite{gorillaxb, apex2018, espeholt2018impala, kapturowski2018recurrent, reactor} use parallel environments as well, but disentangle the data collection (actors) from the network updates (learners). This provides several computational benefits, including better hardware utilization and reduced straggler effects. By disentangling acting from learning these algorithms must use off-policy methods to handle learning from data that is not directly generated from the current policy (e.g., slightly older policies).


Gossip-based approaches have been extensively studied in the control-systems literature as a way to aggregate information for distributed optimization algorithms ~\citep{nedic2018network}. In particular, recent works have proposed to combine gossip algorithms with stochastic gradient descent in order to train Deep Neural Networks~\citep{lian2017asynchronous,lian2017can,assran2018stochastic}, but unlike our work, focus only on the supervised classification paradigm.
 

\section{Experiments}
We evaluate \GALA for training Deep RL agents on Atari-2600 games~\citep{machado17arcade}. We focus on the same six games studied in \cite{stooke2018accelerated}.
Unless otherwise-stated, all learning curves show averages over 10 random seeds with $95\%$ confidence intervals shaded in. We follow the reproducibility checklist \citep{repro_checklist}, see Appendix~\ref{app:repro} for details.

We compare \AtwC~\citep{baselines}, \AthC~\citep{mnih2016asynchronous}, \IMPALA~\citep{espeholt2018impala}, and \GALA-\AtwC.
All methods are implemented in PyTorch~\citep{paszke2017automatic}.
While \AthC was originally proposed with CPU-based agents with 1-simulator per agent,~\citet{stooke2018accelerated} propose a large-batch variant in which each agent manages 16-simulators and performs batched inference on a GPU.
We found this large-batch variant to be more stable and computationally efficient (cf.~Appendix~\ref{app:a3c}).
We use the~\cite{stooke2018accelerated} variant of \AthC to provide a more competitive baseline.
We parallelize \AtwC training via the canonical approach outlined in~\cite{stooke2018accelerated}, whereby individual \AtwC agents (running on potentially different devices), all average their gradients together before each update using the \AllReduce primitive.\footnote{This is mathematically equivalent to a single \AtwC agent with multiple simulators (\eg, $n$ agents, with $b$ simulators each, are equivalent to a single agent with $nb$ simulators).}
For \AtwC and \AthC we use the hyper-parameters suggested in~\cite{stooke2018accelerated}.
For \IMPALA we use the hyper-parameters suggested in~\cite{espeholt2018impala}.
For \GALA-\AtwC we use the same hyper-parameters as the original (non-gossip-based) method.
All \GALA agents are configured in a directed ring graph.
All implementation details are described in Appendix~\ref{app:implementation}.
For the \IMPALA baseline, we use a prerelease of TorchBeast~\citep{kuttler2019torchbeast} available at~\href{https://github.com/facebookresearch/torchbeast}{https://github.com/facebookresearch/torchbeast}.

\paragraph{Convergence and stability:}

We begin by empirically studying the convergence and stability properties of \AtwC and \GALA-\AtwC.
Figure~\ref{fig:sim_sweep} depicts the percentage of successful runs (out of 10 trials) of standard policy-gradient \AtwC when we sweep the number of simulators across six different games.
We define a run as successful if it achieves better than $50\%$ of nominal $16$-simulator \AtwC scores.
When using \AtwC, we observe an identical trend across all games in which the number of successful runs decreases significantly as we increase the number of simulators.
Note that the \AtwC batch size is proportional to the number of simulators, and when increasing the number of simulators we adjust the learning rate following the recommendation in~\cite{stooke2018accelerated}.

\begin{figure}
    \centering
    \subfloat[Simulator sweep: Percentage of convergent runs out of $10$ trials.
                \label{fig:sim_sweep}]{\includegraphics[width=0.95\textwidth]{./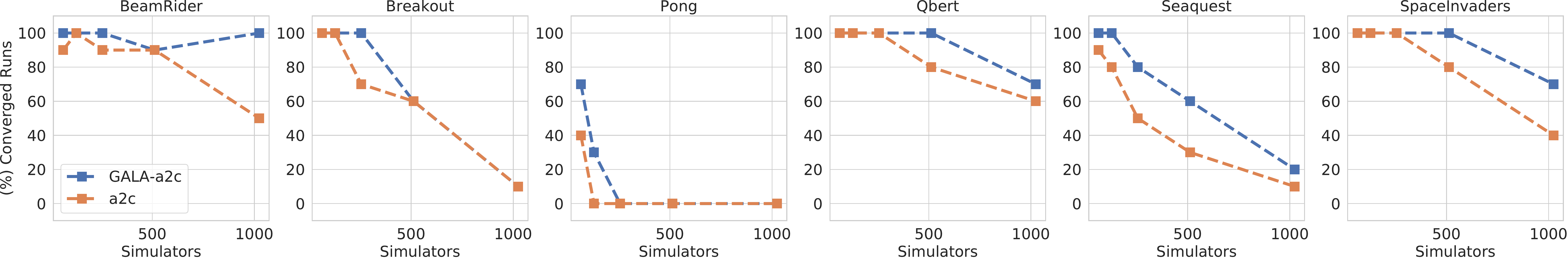}} \\
    \subfloat[Sample complexity: Best 3 runs for each method.
                \label{fig:best_64sim_itr}]{\includegraphics[width=0.95\textwidth]{./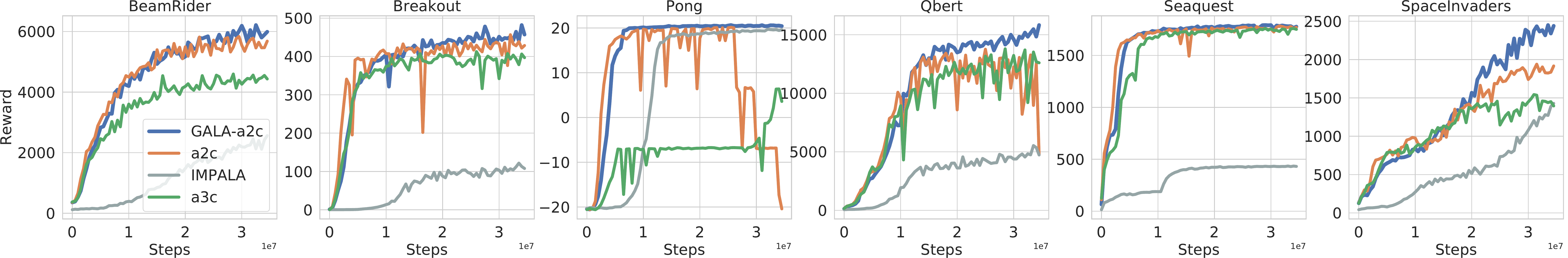}} \\
    \subfloat[Sample complexity: Average across 10 runs.
                \label{fig:64sim_itr}]{\includegraphics[width=0.95\textwidth]{./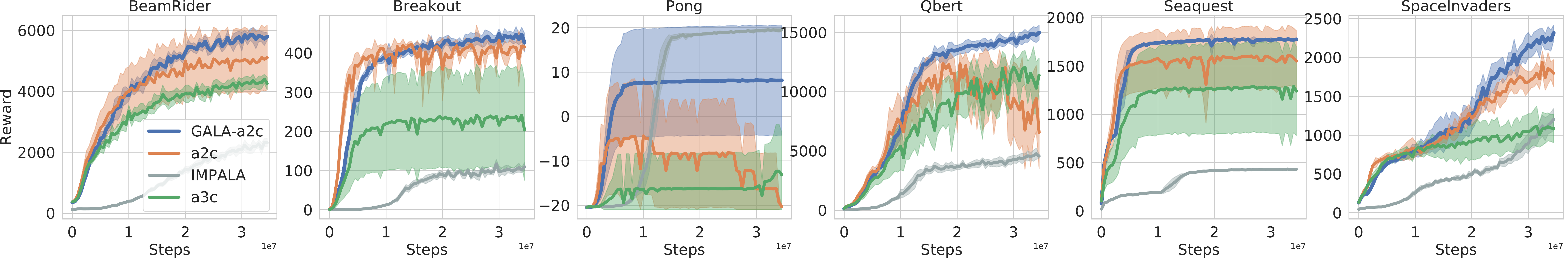}} \\
    \subfloat[Computational complexity: Best 3 runs for each method.
                \label{fig:best_64sim_time}]{\includegraphics[width=0.95\textwidth]{./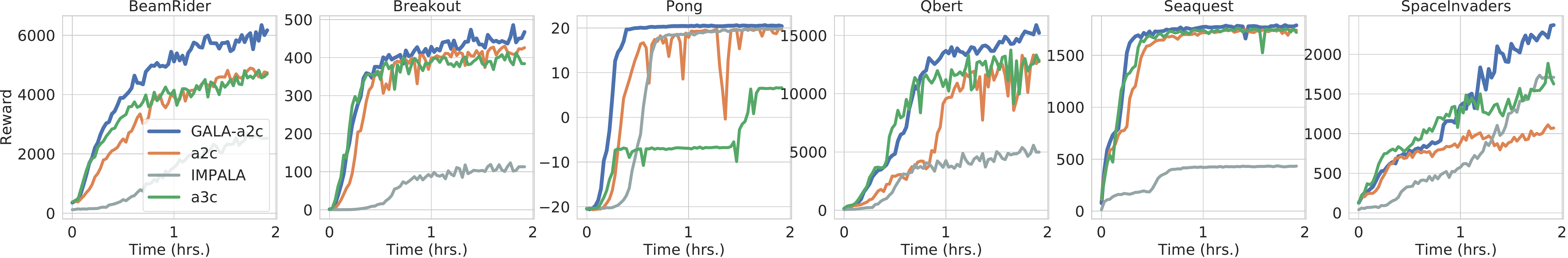}} \\
    \subfloat[Computational complexity: Average across 10 runs. 
                \label{fig:64sim_time}]{\includegraphics[width=0.95\textwidth]{./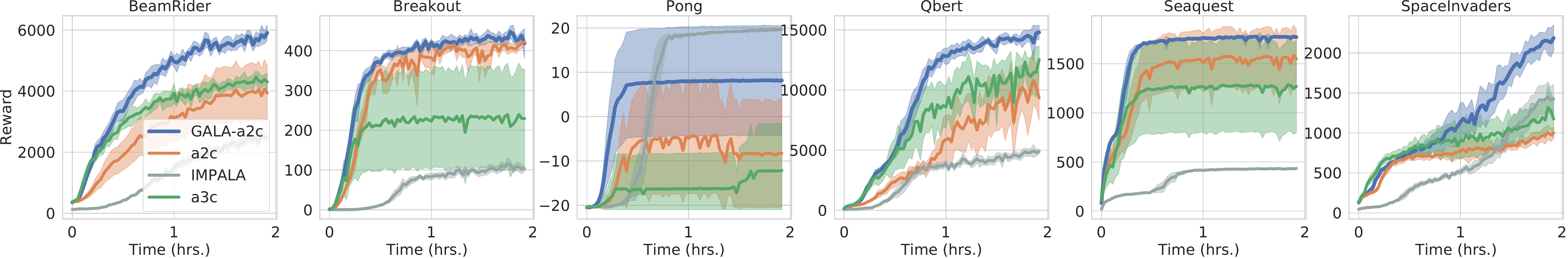}} \\
    \subfloat[Energy efficiency: Best 3 runs for each method. \label{fig:best_64sim_energy}]{\includegraphics[width=0.95\textwidth]{./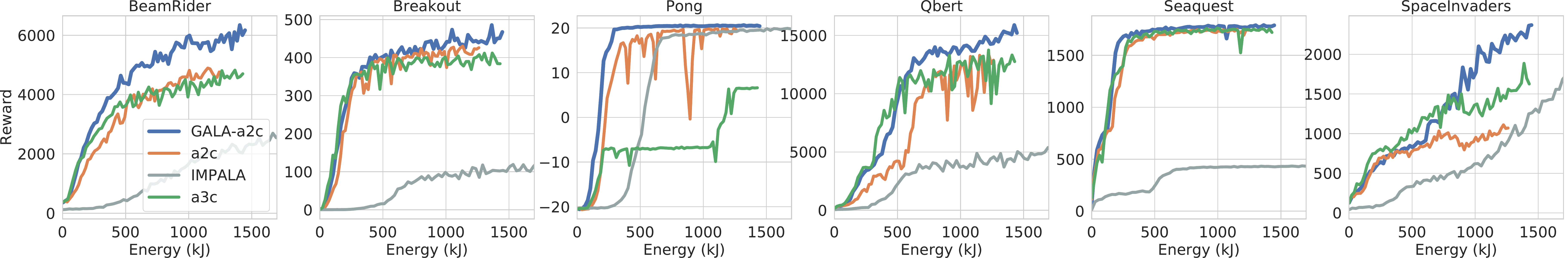}} \\
    \subfloat[Energy efficiency: Average across 10 runs. \label{fig:64sim_energy}]{\includegraphics[width=0.95\textwidth]{./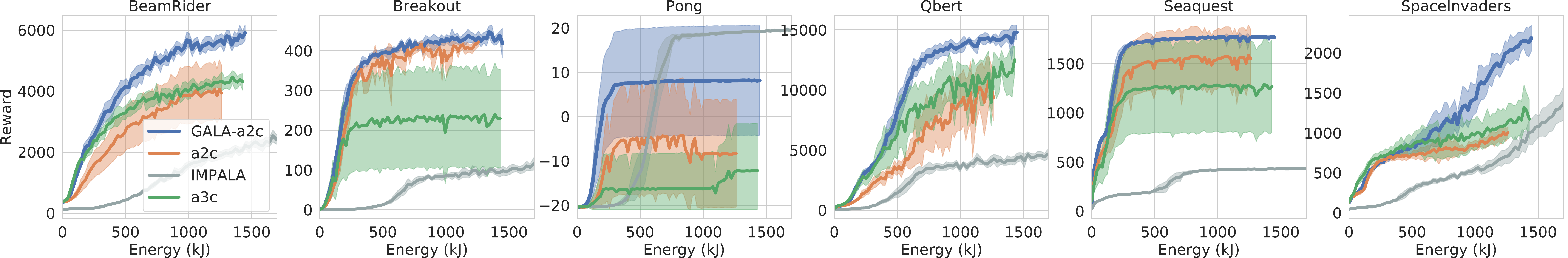}}
    \caption{
    (a) \GALA increases or maintains the percentage of convergent runs relative to \AtwC.
    (b)-(c) \GALA maintains the best performance of \AtwC while being more robust.
    (d)-(e) \GALA achieves competitive scores in each game and in the shortest amount of time.
    (f)-(g) \GALA achieves competitive game scores while being energy efficient.
    }
    \label{fig:64sim_all}
\end{figure}

Figure~\ref{fig:sim_sweep} also depicts the percentage of successful runs when \AtwC agents communicate their parameters using gossip algorithms (\GALA-\AtwC).
In \emph{every} simulator sweep across the six games (600 runs), the gossip-based architecture increases or maintains the percentage of successful runs relative to vanilla \AtwC, when using identical hyper-parameters.
We hypothesize that exercising slightly different policies at each learner using gossip-algorithms can provide enough decorrelation in gradients to improve learning stability.
We revisit this point later on (cf.~Figure~\ref{subfig:hw-bar}).
We note that~\cite{stooke2018accelerated} find that stepping through a random number of uniform random actions at the start of training can partially mitigate this stability issue.
We did not use this random start action mitigation in the reported experiments.

While Figure~\ref{fig:sim_sweep} shows that \GALA can be used to stabilize multi-simulator \AtwC and increase the number of successfull runs, it does not directly say anything about the final performance of the learned models.
Figures~\ref{fig:best_64sim_itr} and~\ref{fig:64sim_itr} show the rewards plotted against the number of environment steps when training with 64 simulators.
Using gossip-based architectures stabilizes and maintains the peak performance and sample efficiency of \AtwC across all six games (Figure~\ref{fig:best_64sim_itr}), and also increases the number of convergent runs (Figure~\ref{fig:64sim_itr}).

Figures~\ref{fig:best_64sim_time} and~\ref{fig:64sim_time} compare the wall-clock time convergence of \GALA-\AtwC to vanilla \AtwC.
Not only is \GALA-\AtwC more stable than \AtwC, but it also runs at a higher frame-rate by mitigating straggler effects.
In particular, since \GALA-\AtwC learners do not need to synchronize their gradients, each learner is free to run at its own rate without being hampered by variance in peer stepping times.

\paragraph{Comparison with distributed Deep RL approaches:}

Figure~\ref{fig:64sim_all} also compares \GALA-\AtwC to state-of-the-art methods like \IMPALA and \AthC.\footnote{We report results for both the TorchBeast implementation of \IMPALA, and from Table $C.1$ from \cite{espeholt2018impala}}
In each game, the \GALA-\AtwC learners exhibited good sample efficiency and computational efficiency, and achieved highly competitive final game scores.
\begin{table*}[t]
\caption{
Across all training seeds we select the best final policy produced by each method at the end of training and evaluate it over 10 evaluation episodes (up to $30$ no-ops at the start of the episode).
Evaluation actions generated from $\argmax_a \pi(a|s)$.
The table depicts the mean and standard error across these 10 evaluation episodes.
}
\label{tb:gala_64sim}
\centering
\small
\begin{tabular}{l c l l l l l l} \toprule 
& Steps & \multicolumn{1}{c}{\textbf{BeamRider}} & \multicolumn{1}{c}{\textbf{Breakout}} & \multicolumn{1}{c}{\textbf{Pong}} & \multicolumn{1}{c}{\textbf{Qbert}} & \multicolumn{1}{c}{\textbf{Seaquest}} & \multicolumn{1}{c}{\textbf{SpaceInvaders}} \\ \midrule
 \IMPALA\footnotemark[1] & 50M & 8220 & 641 & \textbf{21} & 18902 & 1717 & 1727 \\ \midrule
 \IMPALA & 40M & 7118 $\scriptstyle \pm 2536$ & 127 $\scriptstyle \pm 65$ & \textbf{21} $\scriptstyle \pm 0$ & 7878 $\scriptstyle \pm 2573$ & 462 $\scriptstyle \pm 2$ & \textbf{4071} $\scriptstyle \pm 393$ \\
 \AthC & 40M & 5674 $\scriptstyle \pm 752$ & 414 $\scriptstyle \pm 56$ & \textbf{21} $\scriptstyle \pm 0$ & 14923 $\scriptstyle \pm 460$ & 1840 $\scriptstyle \pm 0$ & 2232 $\scriptstyle \pm 302$ \\
 \AtwC & 25M & 8755 $\scriptstyle \pm 811$ & 419 $\scriptstyle \pm 3$ & \textbf{21} $\scriptstyle \pm 0$ & 16805 $\scriptstyle \pm 172$ & 1850 $\scriptstyle \pm 5$ & 2846 $\scriptstyle \pm 22$ \\
 \AtwC & 40M & \textbf{9829} $\scriptstyle \pm 1355$ & 495 $\scriptstyle \pm 57$ & \textbf{21} $\scriptstyle \pm 0$ & 19928 $\scriptstyle \pm 99$ & \textbf{1894} $\scriptstyle \pm 6$ & 3021 $\scriptstyle \pm 36$ \\
 \midrule

 \GALA-\AtwC & 25M & \textbf{9500} $\scriptstyle \pm 1020$ & \textbf{690} $\scriptstyle \pm 72$ & \textbf{21} $\scriptstyle \pm 0$ & 18810 $\scriptstyle \pm 37$ & 1874 $\scriptstyle \pm 4$ & 2726 $\scriptstyle \pm 189$ \\
  \GALA-\AtwC & 40M & \textbf{10188} $\scriptstyle \pm 1316$ & \textbf{690} $\scriptstyle \pm 72$ & \textbf{21} $\scriptstyle \pm 0$ & \textbf{20150} $\scriptstyle \pm 28$ & \textbf{1892} $\scriptstyle \pm 6$ & 3074 $\scriptstyle \pm 69$ \\
 \bottomrule
\end{tabular}
\begin{flushleft}
$^{1}$ \footnotesize{\cite{espeholt2018impala} results using shallow network (identical to the network used in our experiments).}
\end{flushleft}
\end{table*}
Next we evaluate the final policies produced by each method at the end of training.
After training across 10 different seeds, we are left with 10 distinct policies per method.
We select the best final policy and evaluate it over 10 evaluation episodes, with actions generated from $\argmax_a \pi(a|s)$.
In almost every single game, the \GALA-\AtwC learners achieved the highest evaluation scores of any method.
Notably, the \GALA-\AtwC learners that were trained for 25M steps achieved (and in most cases surpassed) the scores for \IMPALA learners trained for 50M steps~\citep{espeholt2018impala}.

\paragraph{Effects of gossip:}
To better understand the stabilizing effects of \GALA, we evaluate the diversity in learner policies during training.
Figure~\ref{subfig:epsilon-ball} shows the distance of the agents' parameters from consensus throughout training.
The theoretical upper bound in Proposition~\ref{prop:no-ass} is also explicitly calculated and plotted in Figure~\ref{subfig:epsilon-ball}.
As expected, the learner policies remain within an $\epsilon$-ball of one-another in weight-space, and this size of this ball is remarkably well predicted by Proposition~\ref{prop:no-ass}.

\begin{figure}[t]
    \centering
    \subfloat[\label{subfig:epsilon-ball}]{\includegraphics[width=.25\textwidth]{./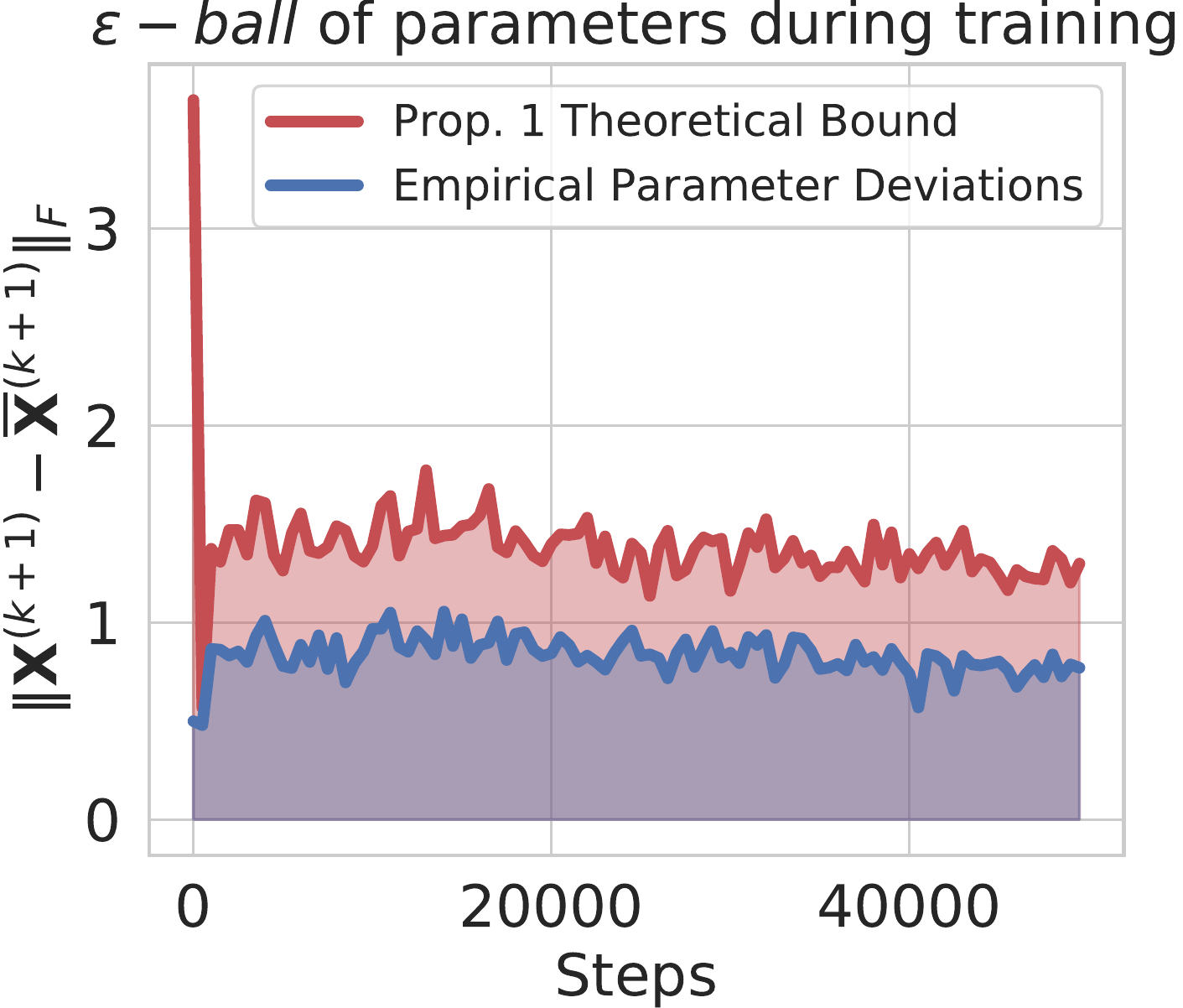}} \quad \quad
    \subfloat[\label{subfig:grad-corr}]{\includegraphics[width=.5\textwidth]{./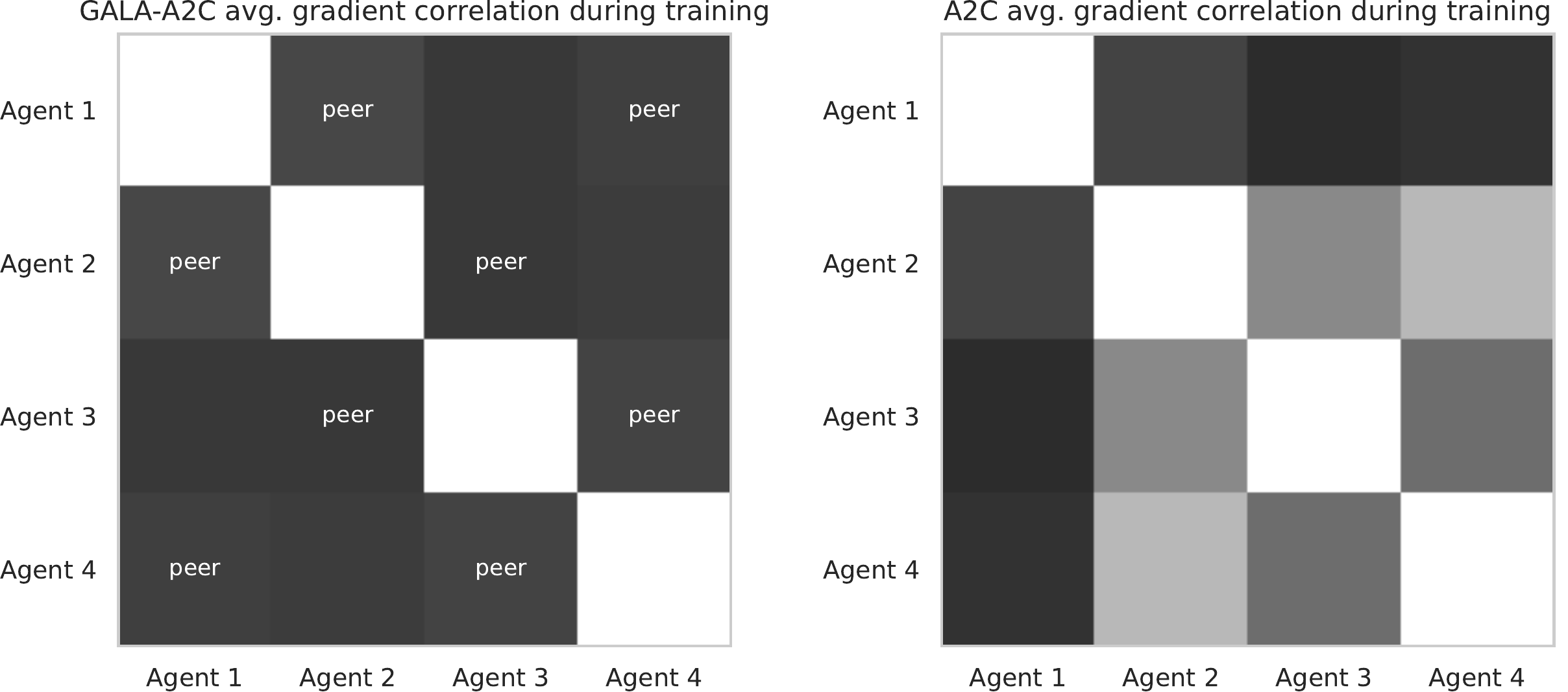}}
    \caption{(a) The radius of the $\epsilon$-ball within which the agents' parameters reside during training.
    The theoretical upper bound in Proposition~\ref{prop:no-ass} is explicitly calculated and compared to the true empirical quantity. The bound in Proposition~\ref{prop:no-ass} is remarkably tight. (b) Average correlation between agents' gradients during training (darker colors depict low correlation and lighter colors depict higher correlations).
    Neighbours in the \GALA-\AtwC topology are annotated with the label ``peer.'' The \GALA-\AtwC heatmap is generally much darker than the \AtwC heatmap, indicating that \GALA-\AtwC agents produce more diverse gradients with significantly less correlation.
    }
    \label{fig:gossip-effects}
\end{figure}
\begin{figure}[t]
    \centering
    \subfloat[\label{subfig:hw-scatter}]{\includegraphics[width=.5\textwidth]{./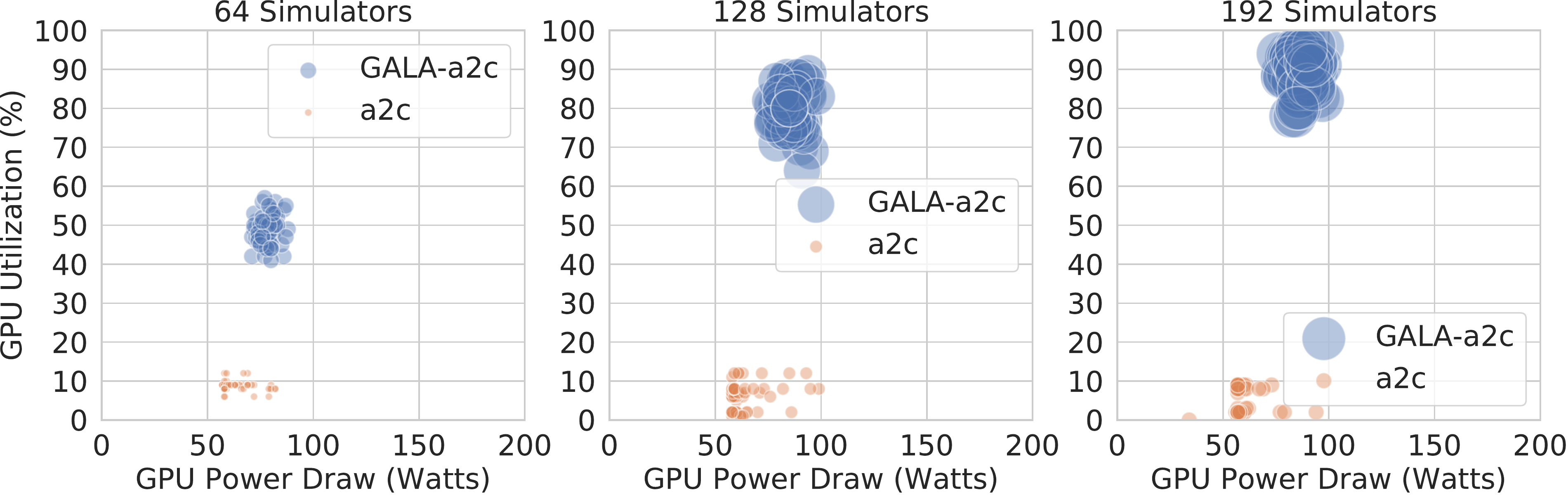}}
    \hfill 
    \subfloat[\label{subfig:hw-bar}]{\includegraphics[width=.5\textwidth]{./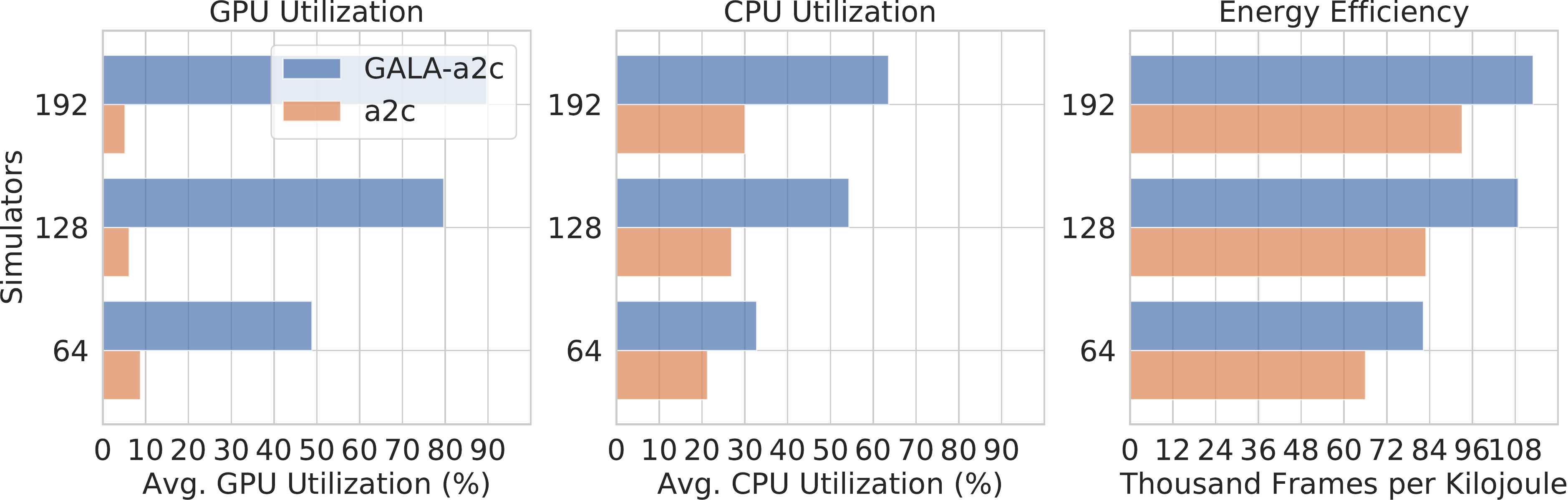}}
    \caption{
    Comparing \GALA-\AtwC hardware utilization to that of \AtwC when using one NVIDIA V100 GPU and 48 Intel CPUs.
    (a) Samples of instantaneous GPU utilization and power draw plotted against each other. Bubble sizes indicate frame-rates obtained by the corresponding algorithms; larger bubbles depict higher frame-rates.
    \GALA-\AtwC achieves higher hardware utilization than \AtwC at comparable power draws.
    This translates to much higher frame-rates and increased energy efficiency.
    (b) Hardware utilization/energy efficiency vs.~number of simulators.
    \GALA-\AtwC benefits from increased parallelism and achieves a 10-fold improvement in GPU utilization over  \AtwC.
    }
    \label{fig:hardware-util}
\end{figure}

Next, we measure the diversity in the agents' gradients.
We hypothesize that the $\epsilon$-diversity in the policies predicted by Proposition~\ref{prop:no-ass}, and empirically observed in Figure~\ref{subfig:epsilon-ball}, may lead to less correlation in the agents' gradients.
The categorical heatmap in Figure~\ref{subfig:grad-corr} shows the pair-wise cosine-similarity between agents' gradients throughout training, computed after every $500$ local environment steps, and averaged over the first 10M training steps.
Dark colors depict low correlations and light colors depict high correlations.
We observe that \GALA-\AtwC agents exhibited less gradient correlations than \AtwC agents.
Interestingly, we also observe that \GALA-\AtwC agents' gradients are more correlated with those of peers that they explicitly communicate with (graph neighbours), and less correlated with those of agents that they do not explicitly communicate with.

\paragraph{Computational performance:}
\label{sec:computational_perf}

Figure~\ref{fig:hardware-util} showcases the hardware utilization and energy efficiency of \GALA-\AtwC compared to \AtwC as we increase the number of simulators.
Specifically, Figure~\ref{subfig:hw-scatter} shows that \GALA-\AtwC achieves significantly higher hardware utilization than vanilla \AtwC at comparable power draws.
This translates to much higher frame-rates and increased energy efficiency.
Figure~\ref{subfig:hw-bar} shows that \GALA-\AtwC is also better able to leverage increased parallelism and achieves a 10-fold improvement in GPU utilization over vanilla \AtwC.
Once again, the improved hardware utilization and frame-rates translate to increased energy efficiency.
In particular, \GALA-\AtwC steps through roughly 20 thousand more frames per Kilojoule than vanilla \AtwC.
Figures~\ref{fig:best_64sim_energy} and~\ref{fig:64sim_energy} compare game scores as a function of energy utilization in Kilojoules.
\GALA-\AtwC is distinctly more energy efficient than the other methods, achieving higher game scores with less energy utilization.


\section{Conclusion}
\label{sec:discussion}

We propose Gossip-based Actor-Learner Architectures (\GALA) for accelerating Deep Reinforcement Learning by leveraging parallel actor-learners that exchange information through asynchronous gossip.
We prove that the \GALA agents' policies are guaranteed to remain within an $\epsilon$-ball during training, and verify this empirically as well.
We evaluated our approach on six Atari games, and find that \GALA-\AtwC improves the computational efficiency of \AtwC, while also providing extra stability and robustness by decorrelating gradients.
\GALA-\AtwC also achieves significantly higher hardware utilization than vanilla \AtwC at comparable power draws, and is competitive with state-of-the-art methods like \AthC and \IMPALA.

\subsubsection*{Acknowledgments}

We would like to thank the authors of TorchBeast for providing their pytorch implementation of \IMPALA.

\bibliography{./NeuRIPS2019/gala.bib}

\newpage
\appendix
\appendixpage
\addappheadtotoc

\section{Reproducibility Checklist}
We follow the reproducibility checklist~\cite{repro_checklist}:
\label{app:repro}
For all algorithms presented, check if you include:
\begin{itemize}
    \item \textbf{A clear description of the algorithm.} See Algorithm~\ref{alg:gala}
    \item \textbf{An analysis of the complexity (time, space, sample size) of the algorithm.} See Figures~\ref{fig:64sim_all} and  ~\ref{fig:hardware-util} for an analysis of sample efficiency, wall-clock time, and energy efficiency. 
    \item \textbf{A link to a downloadable source code, including all dependencies.} See the attached zip file.
\end{itemize}
For any theoretical claim, check if you include:
\begin{itemize}
    \item \textbf{A statement of the result.} See Propositions~\ref{prop:no-ass} and \ref{prop:ass} in the main text. 
    \item \textbf{A clear explanation of any assumptions.} See Appendix \ref{apdx:proofs} for full details.
    \item \textbf{A complete proof of the claim.} See Appendix \ref{apdx:proofs} for full details.
\end{itemize}
For all figures and tables that present empirical results, check if you include:
\begin{itemize}
    \item \textbf{A complete description of the data collection process, including sample size.} We used the Arcade Learning Environment \citep{machado17arcade}, specifically we used the gym package - see: \href{https://github.com/openai/gym}{github.com/openai/gym}
    \item \textbf{A link to downloadable version of the dataset or simulation environment.} See: \href{https://github.com/openai/gym}{github.com/openai/gym}
    \item \textbf{An explanation of how samples were allocated for training / validation / testing.} We didn't use explicit training / validation/ testing splits - but ran each algorithm with 10 different random seeds. 
    \item \textbf{An explanation of any data that were excluded.} We only used $6$ atari games due to time constraints - the same $6$ games that were used in \cite{stooke2018accelerated}.
    \item \textbf{The range of hyper-parameters considered, method to select the best hyper-parameter configuration, and specification of all hyper-parameters used to generate results.} We used standard hyper-parameters from \cite{baselines, stooke2018accelerated, espeholt2018impala}. 
    \item \textbf{The exact number of evaluation runs.} We used 10 seeds for the Atari experiments.
    \item \textbf{A description of how experiments were run.} See Appendix~\ref{app:implementation} for full details.
    \item \textbf{A clear definition of the specific measure or statistics used to report results.} $95\%$ confidence intervals are used in all plots / tables unless otherwise stated.
    \item \textbf{Clearly defined error bars.} $95\%$ confidence intervals are used in all plots / tables unless otherwise stated.
    \item \textbf{A description of results with central tendency (e.g. mean) and variation (e.g. stddev).} $95\%$ confidence intervals are used in all plots / tables unless otherwise stated.
    \item \textbf{A description of the computing infrastructure used.} See Appendix~\ref{app:implementation} for full details.
\end{itemize}

\section{Proofs}
\label{apdx:proofs}

\paragraph{Setting and Notation}
Before presenting the theoretical guarantees, we define some notation. Suppose we have $n$ learners (\eg, actor-critic agents) configured in a peer-to-peer communication topology represented by a directed and potentially time-varying graph (the non-zero entries in the mixing matrix $\bm{P}^{(k)}$ define the communication topology at each iteration $k$).

Learners constitute vertices in the graph, denoted by $v_i$ for all $i \in [n]$ , and edges constitute directed communication links.
Let $\Nout_i$ denote agent $v_i$'s \emph{out-peers}, the set of agents that $v_i$ can send messages to, and let $\Nin_i$ denote agent $v_i$'s \emph{in-peers}, the set of agents that can send messages to $v_i$.
If the graph is time-varying, these sets are annotated with time indices.
Let $\param_i \in \R^{d}$ denote the agent $v_i$'s complete set of trainable parameters, and let the training function $\Tfunc_i : \R^d \mapsto \R^d$ define agent $v_i$'s training dynamics (\ie, agent $v_i$ optimizes its parameters by iteratively computing $\param_i \gets \Tfunc_i(\param_i)$).

For each agent $v_i$ we define send- and receive-buffers, $\Bbuff_i$ and $\Rbuff_i$ respectively, which are used by the underlying communication system (standard in the gossip literature~\citep{tsitsiklis1986distributed}).
When an agent wishes to broadcast a message to its out-peers, it simply copies the message into its broadcast buffer.
Similarly, when agent receives a message, it is automatically copied into the receive buffer.
For convenience, we assume that each learner $v_i$ can hold at most one message from in-peer $v_j$ in its receive buffer, $\Rbuff_i$ at any time $k$; \ie, a newly received message from agent $v_j$ overwrites the older one in the receive buffer.

Let $k \in \N$ denote the global iteration counter. That is, $k$ increments whenever any agent (or subset of agents) completes one loop in Algorithm~\ref{alg:gala}.
Consequently, at each global iteration $k$, there is a set of agents $\mathcal{I}$ that are activated, and within this set there is a (possibly non-empty) subset of agents $\mathcal{C} \subseteq \mathcal{I}$ that gossip in the same iteration.
If a message from agent $v_j$ is received by agent $v_i$ at time $k$, let $\tau_{j,i}^{(k)}$ denote the time at which this message was sent.
Let $\tau \geq \tau_{j,i}^{(k)}$ for all $i,j \in [n]$ and $k > 0$ denote an upper bound on the message delays.
For analysis purposes, messages are sent with an effective delay such that they arrive right when the agent is ready to process the messages. That is, a message that is sent by agent $v_j$ at iteration $k^\prime$ and processed by agent $v_i$ at iteration $k$, where $k \geq k^\prime$, is treated as having
experienced a delay $\tau_{j,i}^{(k)} = k - k^\prime$, even if the message actually arrives before iteration $k$ and waits in the receive-buffer.

Let $\alpha g_i^{(k)} \defeq \Tfunc_i(\param_i^{(k)}) - \param_i^{(k)}$ denote agent $v_i$'s local computation update at iteration $k$ after scaling by some reference learning rate $\alpha > 0$, and define $g_i^{(k)} \defeq 0$ if agent $v_i$ is not active at iteration $k$.
Algorithm~\ref{alg:gala} can thus be written as follows.
If agent $v_i$ does not gossip at iteration $k$, then its update is simply
\begin{equation}
    \param_i^{(k+1)} = \param_i^{(k)} + \alpha g_i^{(k)}.
\end{equation}
If agent $v_i$ does gossip at iteration $k$, then its update is
\begin{equation}
\label{eq:gala-itr}
    \param_i^{(k + 1)} = \frac{1}{1 + |\Nin_i|} \left(\param_i^{(k)} + \sum_{j \in \Nin_i} \param_j^{\tau_{j,i}^{(k)}} + \alpha g_i^{(k)} \right),
\end{equation}
where $\param_j^{\tau_{j,i}^{(k)}}$ is the parameter value of the agent $v_j$, at the time where the message was sent, \ie, $\tau_{j,i}^{(k)}$.

We can analyze Algorithm~\ref{alg:gala} in matrix form by stacking all $n$ agents' parameters, $\param_i^{(k)} \in \R^d$, into a matrix $\bm{X}^{(k)}$, and equivalently stacking all of the update vectors, $g_i^{(k)} \in \R^d$, into a matrix $\bm{G}^{(k)}$.
In order to represent the state of messages that are in transit (sent but not yet received), for analysis purposes, we augment the graph topology with virtual nodes using a standard graph augmentation~\citep{hadjicostis2013average} (we add $\tau$ virtual nodes for each non-virtual agent, where each virtual node stores a learner's parameters at a specific point within the last $\tau$ iterations).
Let $\naug \defeq n(\tau + 1)$ denote the cardinality of the augmented graph's vertex set.
Equation~\eqref{eq:gala-itr} can be re-written as
\begin{equation}
\label{eq:gala-matrix-itr}
    \bm{X}^{(k + 1)} = \bm{\tilde{P}}^{(k)}\left( \bm{X}^{(k)} + \alpha \bm{G}^{(k)} \right),
\end{equation}
where $\bm{X}^{(k)}, \bm{G}^{(k)} \in \R^{\naug \times d}$, and the mixing matrix $\bm{\tilde{P}}^{(k)} \in \R^{\naug \times \naug}$ corresponding to the augmented graph is row-stochastic for all iterations $k$, \ie, all entries are non-negative, and all rows sum to $1$ .
Mapping~\eqref{eq:gala-itr} to~\eqref{eq:gala-matrix-itr} may not be obvious, but is quite standard in the recent literature.
We refer the interested reader to~\cite{assran2018asynchronous, hadjicostis2013average}.

\begin{proof}[Proof of Proposition~\ref{prop:no-ass}]
The proof is very similar to the proofs in~\cite{assran2018stochastic} and~\cite{assran2018asynchronous}, and makes use of the graph augmentations in~\cite{hadjicostis2013average}, the lower dimensional stochastic matrix dynamics in~\cite{blondel2005convergence}, and the ergodic matrix results in~\cite{wolfowitz1963products}.
Since the matrices $\bm{\tilde{P}}^{(k)}$ are row-stochastic, their largest singular value is $1$, which corresponds to singular vectors in $\sspan{1_{\naug}}$.
Let the matrix $\bm{Q} \in \R^{(\naug - 1) \times \naug}$ define an orthogonal projection onto the space orthogonal to $\sspan{1_{\naug}}$.
Associated to each matrix $\bm{\tilde{P}}^{(k)} \in \R^{\naug \times \naug}$ there is a unique matrix $\bm{\tilde{P}}^{\prime (k)} \in \R^{(\naug - 1) \times (\naug - 1)}$ such that $\bm{Q} \bm{\tilde{P}}^{(k)} = \bm{\tilde{P}}^{\prime (k)} \bm{Q}$.
Let $\bm{\tilde{P}}^{\prime}$ denote the collection of matrices $\bm{\tilde{P}}^{\prime (k)}$ for all $k$.
The spectrum of the matrices $\bm{\tilde{P}}^{\prime (k)}$ is the spectrum of $\bm{\tilde{P}}^{(k)}$ after removing one multiplicity of the singular value $1$.
From~\eqref{eq:gala-matrix-itr}, we have
\begin{align}
\label{eq:prf-prop-1}
\begin{split}
    \bm{Q} \bm{X}^{(k + 1)} =& \bm{Q} \bm{\tilde{P}}^{(k)} \cdots \bm{\tilde{P}}^{(1)} \bm{\tilde{P}}^{(0)} \bm{X}^{(0)} + \alpha \sum^{k}_{s=0} \bm{Q} \bm{\tilde{P}}^{(k)} \cdots \bm{\tilde{P}}^{(s + 1)} \bm{\tilde{P}}^{(s)} \bm{G}^{(s)} \\
    =& \bm{\tilde{P}}^{\prime (k)} \cdots \bm{\tilde{P}}^{\prime (1)} \bm{\tilde{P}}^{\prime (0)} \bm{Q} \bm{X}^{(0)} + \alpha \sum^{k}_{s=0} \bm{\tilde{P}}^{\prime (k)} \cdots \bm{\tilde{P}}^{\prime (s + 1)} \bm{\tilde{P}}^{\prime (s)} \bm{Q} \bm{G}^{(s)}.
\end{split}
\end{align}
Note that $\bm{Q} (\bm{X}^{(k + 1)} - \overline{\bm{X}}^{(k+1)}) = 0$ and $(\bm{X}^{(k + 1)} - \overline{\bm{X}}^{(k+1)})^T 1_{\naug} = 0$.
Thus
\begin{align*}
    \norm{ \bm{X}^{(k + 1)} - \overline{\bm{X}}^{(k+1)} } =& \norm{\bm{Q}(\bm{X}^{(k + 1)} - \overline{\bm{X}}^{(k+1)})} \\
    \leq& \norm{\bm{\tilde{P}}^{\prime (k)} \cdots \bm{\tilde{P}}^{\prime (s + 1)} \bm{\tilde{P}}^{\prime (0)} \bm{Q} \bm{X}^{(0)}} + \alpha \sum^{k}_{s=0} \norm{ \bm{\tilde{P}}^{\prime (k)} \cdots \bm{\tilde{P}}^{\prime (s + 1)} \bm{\tilde{P}}^{\prime (s)} \bm{Q} \bm{G}^{(s)} },
\end{align*}
where we have implicitly also made use of~\eqref{eq:prf-prop-1}.

Defining $\beta \defeq \sup_{s=0,1,\ldots, k} \sigma_2( \bm{\tilde{P}}^{\prime (k)} \cdots \bm{\tilde{P}}^{\prime (s + 1)} \bm{\tilde{P}}^{\prime (s)})$, it follows that
\begin{align}
\label{eq:prf-prop-2}
    \norm{ \bm{X}^{(k + 1)} - \overline{\bm{X}}^{(k+1)} } \leq& \beta^{k + 1} \norm{\bm{Q} \bm{X}^{(0)}} + \alpha \sum^{k}_{s=0} \beta^{k + 1 - s} \norm{\bm{Q} \bm{G}^{(s)} }.
\end{align}
Assuming all learners are initialized with the same parameters, the first exponentially decay term on the right hand side of~\eqref{eq:prf-prop-2} vanishes and we have
\begin{align*}
    \norm{ \bm{X}^{k + 1} - \overline{\bm{X}}^{(k+1)}} \leq \alpha \sum^{k}_{s=0} \beta^{k + 1 - s} \norm{\bm{G}^{(s)}}.
\end{align*}
\end{proof}

\begin{proof}[Proof of Proposition~\ref{prop:ass}]
The proof extends readily from Proposition~\ref{prop:no-ass}.
Given the assumptions on the graph sequence, the product of the matrices $\bm{\tilde{P}}^{(k)} \cdots \bm{\tilde{P}}^{(s + 1)} \bm{\tilde{P}}^{(s)}$ is ergodic for any $k - s \geq \tau + B$ (cf.~\citep{assran2018masters-asynchronous}).
Letting
$\beta \defeq \sup_{s=0,1,\ldots, k} \sigma_2( \bm{\tilde{P}}^{\prime (k)} \cdots \bm{\tilde{P}}^{\prime (s)})$, it follows from~\cite{wolfowitz1963products} and~\cite{blondel2005convergence} that $\beta < 1$.
\end{proof}

\section{Implementation Details}
\label{app:implementation}

\subsection{\AthC Implementation Comparison}
\label{app:a3c}
\begin{figure}[t]
    \centering
    \subfloat[Sample complexity: Average across 10 runs.]{\includegraphics[width=1.\textwidth]{./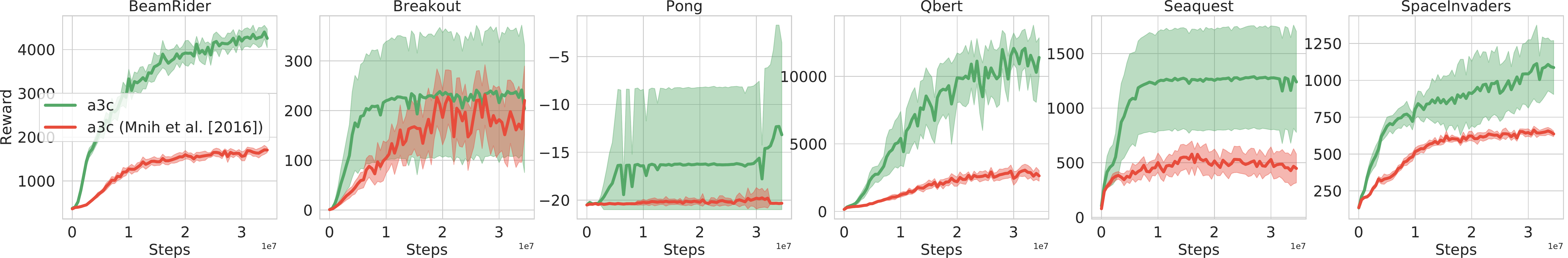}} \\
    \subfloat[Computational complexity: Average across 10 runs.]{\includegraphics[width=1.\textwidth]{./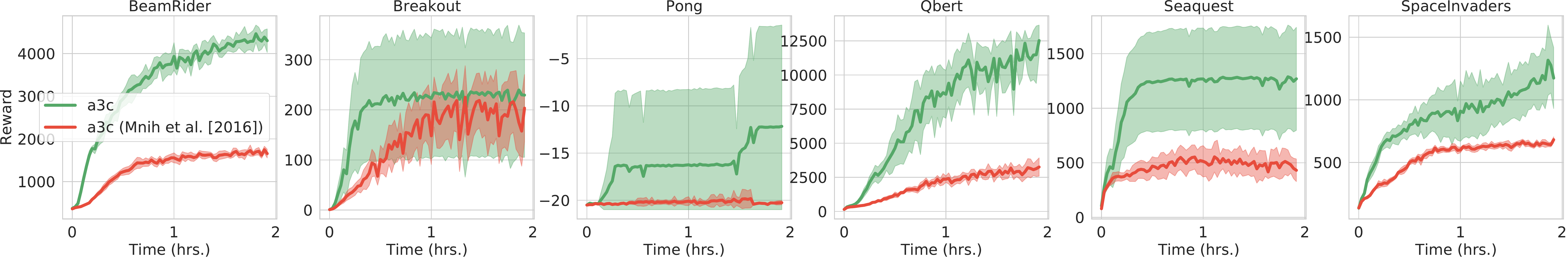}}
    \caption{Comparing large-batch \AthC~\citep{stooke2018accelerated} to original \AthC~\citep{mnih2016asynchronous}.
    Running \AthC with larger-batches provides more stable and sample efficient convergence (top row), while also maintaining computational efficiency by leveraging GPU acceleration (bottom row).}
    \label{fig:a3c_variants}
\end{figure}
While \AthC was originally proposed with CPU-based agents with 1-simulator per agent, \citet{stooke2018accelerated} propose a variant in which each agent manages 16-simulators and performs batched inference on a GPU.
Figure~\ref{fig:a3c_variants} compares 64-simulator learning curves using \AthC as originally proposed in~\citet{mnih2016asynchronous} to the large-batch variant in~\citet{stooke2018accelerated}.
The large-batch variant appears to be more robust and computationally efficient, therefore we use this GPU-based version of \AthC in our main experiments to provide a more competitive baseline.

\subsection{Experimental Setup}
All experiments use the network suggested by~\citet{baselines}.
Specifically, the network contains $3$ convolutional layers and one hidden layer, followed by a linear output layer for the policy/linear output layer for the critic.
The hyper-parameters for \AtwC, \AthC and \GALA-\AtwC are summarized in Table~\ref{tab:hyperparameters}. \IMPALA hyperparameters are the same as reported in~\cite{espeholt2018impala} (cf.~table $G.1$ in their appendix).
\begin{table}[!htbp]
    \centering
    \begin{tabular}{c | c}
    \hline
         \textbf{Hyper-parameter} & \textbf{Value} \\
         \hline \\
         Image Width & $84$ \\
         Image Height & $84$ \\
         Grayscaling & Yes \\
         Action Repetitions & $4$ \\
         Max-pool over last $k$ action repeat frames & $2$ \\
         Frame Stacking & $4$ \\ 
         End of episode when life lost & Yes \\
         Reward Clipping & $[-1, 1]$ \\
         RMSProp momentum & $0.0$ \\ 
         RMSProp $\epsilon$ & $0.01$ \\
         Clip global gradient norm & $0.5$ \\
         Number of simulators per learner($B$) & $16$\\
         Base Learning Rate($\alpha$) & $7 \times 10^{-4}$ \\
         Learning Rate Scaling & $\sqrt{\text{number of learners}}$  (not used in \AthC)\\
         VF coeff & $0.5$\\
         Horizon(N) & $5$\\
         Entropy Coeff. & $0.01$\\
         Discount ($\gamma$)& .99\\
         Max number of no-ops at the start of the episode & $30$ \\
    \hline \\
    \end{tabular}
    \caption{Hyperparameters for both \AtwC, \GALA-\AtwC, and \AthC}
    \label{tab:hyperparameters}
\end{table}

In all \GALA experiments we used $16$ environments per learner, \eg, in the $64$ simulator experiments in Section~\ref{sec:computational_perf} we use $4$ learners. \GALA agents communicate using a 1-peer ring network . Figure~\ref{fig:graph_topology} shows an example of such a ring network. The non-zero weight $p_{i,j}$ of the mixing matrix $\bm{P}$ corresponding to the 1-peer ring are set to $\frac{1}{1 + |\Nin_i|}$, which is equal to $1/2$ as $|\Nin_i| = 1$ for all $i$ in the 1-peer ring graph.

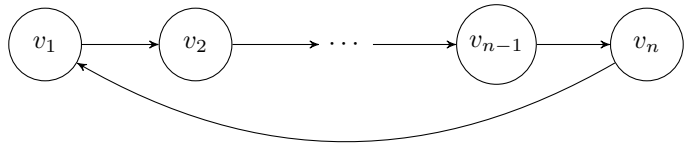
\begin{figure}[h]

\begin{center}
\begin{tikzpicture}[->,>=stealth',auto,node distance=2cm,
  main node/.style={circle,draw,minimum size=2.75em}]

  \node[main node] (1) {$v_1$};
  \node[main node] (2) [right of=1] {$v_2$};
  \node (3) [right of=2] {$\cdots$};
  \node[main node] (4) [right of=3] {$v_{n-1}$};
  \node[main node] (5) [right of=4] {$v_{n}$};

  \path[]
    (1) edge node [right] {} (2)
    (2) edge node [right] {} (3)
    (3) edge node [right] {} (4)
    (4) edge node [right] {} (5)
    (5) edge[bend left] node [left] {} (1);
\end{tikzpicture}
	\caption{Example of an n-agents/1-peer ring communication topology used in our experiment }
	\label{fig:graph_topology}
\end{center}
\end{figure}


\end{document}